\documentclass{article}
\usepackage[preprint, nonatbib]{neurips_2019}
\usepackage[utf8]{inputenc} 
\usepackage[T1]{fontenc}    
\usepackage{hyperref}       
\usepackage{url}            
\usepackage{booktabs}       
\usepackage{amsfonts}       
\usepackage{nicefrac}       
\usepackage{microtype}      
\usepackage{amssymb}
\usepackage{amsthm}
\usepackage{color}
\usepackage{subcaption}
\usepackage{algorithm}
\usepackage[noend]{algorithmic}
\usepackage{multirow}
\usepackage{bbm}
\usepackage{enumitem}
\usepackage{multirow}
\usepackage{amsmath}
\usepackage{bbm}
\usepackage{wrapfig}
\usepackage[square,sort,comma,numbers]{natbib}

\newtheorem{obv}{Observation}[section]

\title{Graph Neural Reasoning May Fail in Certifying Boolean Unsatisfiability}

\author{
	Ziliang Chen\footnotemark[1] \\
	Department of Computer Science \\
	Sun Yat-sen University \\
	GuangZhou, Guangdong, China \\
	c.ziliang@yahoo.com \\
	\And 
	Zhanfu Yang\footnotemark[1]\thanks{indicates alphabetic order.}\\
	Department of Computer Science\\
	Purdue University\\
	West Lafayette, IN, USA \\
	yang1676@purdue.edu \\
}
 
\begin{document}
	
	\maketitle
	
	\begin{abstract}
		It is feasible and practically-valuable to bridge the characteristics between graph neural networks (GNNs) 
		and logical reasoning. Despite considerable efforts and successes witnessed to solve 
		Boolean satisfiability (SAT), 
		it remains a mystery of GNN-based solvers for more complex \emph{predicate logic} formulae.
		In this work, we conjectures with some evidences, that generally-defined GNNs present several limitations to certify the unsatisfiability (UNSAT) in Boolean formulae. It implies that GNNs may probably fail in learning the logical reasoning tasks if they contain proving UNSAT as the sub-problem included by most predicate logic formulae. 
	\end{abstract}
	\section{Introduction}
	Logical reasoning problems span from simple propositional logic to complex predicate logic and high-order logic, with known theoretical complexities from NP-completeness \citep{Cook:1971:CTP:800157.805047} to semi-decidable and undecidable \citep{DBLP:journals/jsyml/Church36}.
	Testing the ability and limitation of machine learning tools on logical reasoning problems leads to a fundamental understanding of the boundary of learnability and robust AI, helping to address interesting questions in decision procedures in logic, program analysis, and verification as defined in the programming language community.
	
	There have been arrays of successes in learning propositional logic reasoning
	\citep{DBLP:conf/iclr/SelsamLBLMD19, amizadeh2018learning},
	which focus on Boolean satisfiability (\textbf{SAT}) problems as defined below. A Boolean logic formula is an expression composed of Boolean constants  ($\top$ : true, $\bot$ : false)
	, Boolean variables ($x_i$), and propositional connectives such as $\land$, $\lor$, $\lnot$ (for example $(x_1 \lor \neg x_2) \land (\neg x_1 \lor x_2)$). The SAT problem asks if a given Boolean formula can be satisfied (evaluated to $\top$) by assigning proper Boolean 
	values to the literal variables. A crucial feature of the logical reasoning domain (as is visible in the SAT problem) is that the inputs are often structural, where logical connections between entities (variables in SAT problems) are the key information. 
	
	SAT and its variant problems are almost NP-complete or even more complicated in the complexity. The fact motivates the emergence of sub-optimal heuristic that trades off the solver performance to rapid reasoning. In terms of the fast inference process, deep learning models are favored as learnable heuristic solvers \cite{amizadeh2018learning,DBLP:conf/iclr/SelsamLBLMD19,DBLP:journals/corr/abs-1904-12084}. Among them Graph Neural Networks (GNNs) have grasped amount of attentions, since the message-passing process delivers the transparency to interpret the inference within GNNs, thus, revealing the black box behind neural logical reasoning in the failure instances.
	
	However, it should be noticed that logical decision procedures
	is more complex that just reading the formulas correctly.
	It is unclear if GNN embeddings (from simple message-passing)
	contain all the information needed to
	reason about complex logical questions on top of the graph 
	structures derived from the formulas, 
	or whether the complex embedding schemes can be
	learned from backpropagation.
	Previous successes on SAT problems argued for the power of GNN, which can handle NP-complete problems \citep{DBLP:conf/iclr/SelsamLBLMD19, amizadeh2018learning}, whereas no evidences have been reported for solving semi-decidable predicate logic problems via GNN. The significant difficulty to prove the problems is the requirement of comprehensive reasoning over a search space, since a complete proof includes SAT and UNSAT (\emph{i.e.}, Boolean unsatisfiability). 
	
	Perhaps disappointingly, this work presents some theoretical evidences that support a pessimistic conjecture: GNNs do not simulate the complete solver for UNSAT. Specifically, we discover that the neural reasoning procedure learned by GNNs does simulate the algorithms that may allow a CNF formula changing over iterations. Those complete SAT-solvers, \emph{e.g.}, DPLL and CDCL, are almost common in the operation that adaptively alters the original Boolean formula that eases the reasoning process. So GNNs do not learn to simulate their behaviors. Instead, we prove that by appropriately defining a specific structure of GNN that a parametrized GNN may learn, the local search heuristic in WalkSAT can be simulated by GNN. Towards these results, we believe that GNN can not solve UNSAT in existing logical reasoning problems. 
	
	

	\section{Embedding Logic Formulae by GNNs}\label{sec:gnn_emb}

	\paragraph{Preliminary: Graph Neural Networks (GNNs).}\hspace{-0.6em} GNNs refer to the neural architectures devised to learn the embeddings of nodes and graphs via message-passing.
	Resembling the generic definition in \cite{DBLP:conf/iclr/XuHLJ19}, they consist of two successive operators to propagate the messages and evolve the embeddings over iterations:
	\begin{small}\begin{equation}\label{gnn}
		\begin{aligned}
		m^{(k)}_{v}\hspace{-0.3em}=\hspace{-0.3em}{\rm Aggregate}^{(k)}\Big(\{h^{(k-1)}_{u}:u\in \mathcal{N}(v)\}\Big), \ \ \ h^{(k)}_{v}\hspace{-0.3em}=\hspace{-0.3em}{\rm Combine}^{(k)}\Big(h^{(k-1)}_{v},m^{(k)}_{v}\Big)
		\end{aligned}
		\end{equation}\end{small}where $h^{(k)}_{v}$ denotes the hidden state (embedding) of node $v$ 
	in the $k^{th}$ iteration, and $\mathcal{N}(v)$ 
	denotes the neighbors of node $v$. In each iteration, the ${\rm Aggregate}^{(k)}(\cdot)$ aggregates hidden 
	states from node $v$'s neighbors $\{h^{(k-1)}_{u}:u\in \mathcal{N}(v)\}$ to produce the new message (\emph{i.e.}, $m^{(k)}_{v}$) for node $v$;
	${\rm Combine}^{(k)}(\cdot, \cdot)$ updates the embedding of $v$ in terms of its previous state and its current message. After a specific number of iterations (\emph{e.g.}, $K$ in our discussion), the embeddings should capture the global relational information of the nodes, which can be fed into other neural network modules for specific tasks.

	Significant successes about GNNs have been witnessed in relational reasoning \citep{DBLP:journals/corr/abs-1807-09244,DBLP:journals/corr/LiangSFLY16,DBLP:journals/corr/abs-1810-02338}, where an instance could be departed into multiple objects then encoded by a series of features with their relation. It typically suits representation in Eq. \ref{gnn}. Whereas in logical reasoning, a Boolean formula is in Conjunctive Normal Form (CNF) that consists of literal and clause items. In term of the independence among literals in CNF (so do clauses), \citep{DBLP:conf/iclr/SelsamLBLMD19} embeds a formula into a \emph{bipartite graph}, where the nodes denote the clauses and literals that are disjoint, respectively. In this principle, given a literal $v$ as a node, all the nodes of clauses that contains the literal are routinely treated as $v$'s neighbors, vice and versa for the node of each clause. We assume $\Phi$ is a logic formula in CNF, \emph{i.e.}, a set of clauses, and $\Psi(v) \in \Phi $ denote one of clauses within the logic formula $\Phi$ that contains literal $v$. Derived from Eq. \ref{gnn} , GNNs for logical reasoning can be further specified by
	
	\begin{small}\begin{equation}\label{bgnn}
		\begin{aligned}
		m^{(k)}_{v}\hspace{-0.3em}&=\hspace{-0.3em}{\rm Aggregate}^{(k)}_{L}\Big(\{h^{(k-1)}_{\Psi(v)}: \Psi(v)\in \Phi\}\Big),& h^{(k)}_{v}\hspace{-0.3em}&=\hspace{-0.3em}{\rm Combine}^{(k)}_{L}\Big(h^{(k-1)}_{v},h^{(k-1)}_{\neg v},m^{(k)}_{v}\Big), &s.t. \ \forall v\in L\\
		m^{(k)}_{\Psi(v)}\hspace{-0.3em}&=\hspace{-0.3em}{\rm Aggregate}_{C}^{(k)}\Big(\{h^{(k-1)}_{u}: u\in \Psi(v)\}\Big),& h^{(k)}_{\Psi(v)}\hspace{-0.3em}&=\hspace{-0.3em}{\rm Combine}_{C}^{(k)}\Big(h^{(k-1)}_{\Psi(v)},m^{(k)}_{\Psi(v)}\Big), &s.t. \ \forall \Psi(v)\in \Phi
		\end{aligned}
		\end{equation}\end{small}
	where $h^{(k)}_{v}$ and $h^{(k)}_{\Psi(v)}$ denote embeddings of the literal $v$ and the clause $\Psi(v)$ in the $k^{th}$ iteration ($h^{(k)}_{\neg v}$  denotes the embedding of the negation of $v$); $m^{(k)}_{v}$ and $m^{(k)}_{\Psi(v)}$ refer to their propagated messages. Since the value of a Boolean formula is determined by the value assignment of the literal variables, Eq. \ref{bgnn} solely requires the final-state literal embeddings $\{h^{(K)}_{u}, u\in L\}$ to predict the logical reasoning result.
	More specifically, we use $L$ and $C$ to denote a literal set and a clause set ($L$ and $C$ may be different for each CNF formula), then $\Psi(v)$ is a clause and $\Psi(v)$ denotes a clause including the literal $v\in L$.
	
	Note that the graph embeddings for SAT \cite{10.1007/978-3-319-40970-2_27} and 2QBF \cite{10.1007/978-3-319-40970-2_27} are generally represented by Eq.\ref{bgnn} . Hence our further analysis is based on Eq.\ref{bgnn} .
	
	\section{Certifying UNSAT by GNNs may Fail} Although existing researches showed that GNN can learn a well-performed solver for satisfiability problems, GNN-based SAT solvers actually have terrible performances in predicting
	unsatisfiability with high confidence \citep{DBLP:conf/iclr/SelsamLBLMD19} in a SAT formula, if the formula does not have a small unsatisfiable core (minimal number of clauses that is enough
	to cause unsatisfiability). In fact, some previous work \citep{amizadeh2018learning} even completely removed
	unsatisfiable formulas from the training
	dataset, since they slowed down the whole training process. 
	
	The difficulty in proving unsatisfiability is understandable, since constructing a proof of
	unsatisfiability demands a complete reasoning in the search space, which is more complex than
	constructing a proof of satisfiability that only requires a witness. Traditionally it relies on the recursive decision procedures that either traverse all possible assignments to construct the proof
	(DPLL \citep{DBLP:journals/cacm/DavisLL62}),
	or generate extra constraints from assignment trials that lead to conflicts, until some of the
	constraints contradict each other (CDCL \citep{DBLP:series/faia/SilvaLM09}). The line of recursive algorithms include some operation branches that reconfigure the bipartite graph behind the CNF in each step while they search. In the terms of a graph that may iteratively change (\emph{e.g.}, DPLL), perhaps miserably, their recursive processes can not be simulated by GNNs.
	
	\begin{obv}
		Given a recursive algorithm that iteratively reconfigures the graph, GNNs in Eq.2 can not simulate this recursive process. 
	\end{obv}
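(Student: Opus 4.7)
The plan is to first fix a precise notion of simulation. I would say that a GNN of the form in Eq.~\ref{bgnn} \emph{simulates} a recursive procedure $\mathcal{A}$ on a CNF $\Phi$ if there exists a schedule (a mapping from GNN iterations to steps of $\mathcal{A}$) and a read-out from literal embeddings $\{h^{(k)}_{v} : v\in L\}$ such that, at each scheduled iteration, the read-out recovers the decision/state that $\mathcal{A}$ would hold at the corresponding step on every input $\Phi$. With this definition, to rule out simulation it suffices to exhibit an obstruction that depends only on the form of Eq.~\ref{bgnn} and is violated by any $\mathcal{A}$ that mutates its working graph.

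Next I would isolate the structural invariant of Eq.~\ref{bgnn}: at every iteration $k$ the indexing sets of the aggregators, $\{h^{(k-1)}_{\Psi(v)} : \Psi(v)\in \Phi\}$ and $\{h^{(k-1)}_{u} : u\in \Psi(v)\}$, are determined by the \emph{original} $\Phi$; the formula $\Phi$ is never updated, no clause node is ever deleted, no new edge is ever added. Consequently, for all $k$, the computation DAG unrolled by Eq.~\ref{bgnn} has exactly the bipartite topology of $\Phi$ replicated $k$ times. In particular, the embedding $h^{(K)}_{v}$ is a function of the multiset of rooted neighbourhoods of $v$ in $\Phi$ of radius at most $K$, exactly as in standard WL-style analyses of GNN expressivity.

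The constructive step is then to contrast this with any $\mathcal{A}$ that reconfigures the graph. Suppose $\mathcal{A}$ on input $\Phi$ produces, after its first few recursive calls, a residual formula $\Phi_t$ obtained by clause removal, literal deletion, or clause learning (as DPLL and CDCL do). I would pick two CNF instances $\Phi$ and $\Phi'$ whose bipartite graphs of $\Phi$ and $\Phi'$ are identical as labeled multisets of rooted neighbourhoods up to radius $K$, but whose residuals $\Phi_t$ and $\Phi'_t$ differ so that $\mathcal{A}$ makes distinct decisions at step $t$. By the invariant above, any GNN from Eq.~\ref{bgnn} produces identical literal embeddings on $\Phi$ and $\Phi'$ at every iteration, so no read-out can separate the step-$t$ behavior of $\mathcal{A}$, contradicting the definition of simulation.

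The main obstacle I expect is the potential objection that ``embeddings might encode the reconfigured graph in their hidden states.'' I would rebut this directly by noting that the aggregator at iteration $k{+}1$ still ranges over the original neighbour set $\Psi(v)\in\Phi$; even if embeddings internally represent a residual graph $\Phi_t$, the message routing is still dictated by $\Phi$, so a clause that $\mathcal{A}$ has deleted will continue to deliver a (possibly stale) message, and edges that $\mathcal{A}$ has introduced through learned clauses have no channel to propagate along. The careful part of the write-up will be phrasing this so that the argument does not tacitly restrict $\mathrm{Aggregate}$/$\mathrm{Combine}$ to any specific parametric family: it must rely only on the fixed indexing set, which is where the proof's force lies.
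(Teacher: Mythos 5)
Your second paragraph --- the observation that the indexing sets $\{h^{(k-1)}_{\Psi(v)}:\Psi(v)\in\Phi\}$ and $\{h^{(k-1)}_{u}:u\in\Psi(v)\}$ are fixed by the original $\Phi$ for all $k$ --- is exactly the content of the paper's own proof: the paper writes the fused update rule (Eq.~\ref{lupdate}), notes that $v$, $\neg v$ and the clauses containing $v$ are ``consistent over iterations,'' and declares a contradiction with any algorithm that changes a clause. Where you diverge is that you (correctly) sense this is not yet a contradiction, since a sufficiently clever $\mathrm{Combine}$ could encode the residual formula inside the hidden states and learn to discount messages from ``deleted'' clause nodes. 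To close that loophole you propose a genuinely different superstructure: a formal definition of simulation plus a WL-style indistinguishability argument, exhibiting $\Phi,\Phi'$ with identical radius-$K$ rooted neighbourhoods on which the recursive algorithm $\mathcal{A}$ diverges. That is a stronger and more defensible route than the paper's, because it quantifies over \emph{all} aggregate/combine functions rather than implicitly assuming the update cannot re-route information internally.

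The gap is that the load-bearing step of your argument --- the existence of such a pair $(\Phi,\Phi')$ --- is asserted, not constructed, and nothing else in the proposal substitutes for it. You would need to actually exhibit two CNF bipartite graphs that are indistinguishable under $K$ rounds of message passing (a CFI-style or locally-isomorphic construction adapted to literal/clause graphs, with the $v\leftrightarrow\neg v$ pairing respected) and then verify that DPLL or CDCL, under its concrete branching rule, reaches different residuals or different decisions on the two; neither half is automatic. Your fallback rebuttal in the last paragraph does not rescue this: saying that a deleted clause ``will continue to deliver a stale message'' is not an obstruction, because $\mathrm{Combine}^{(k)}_{L}$ is an arbitrary function and may simply ignore that coordinate, and the point about learned clauses having ``no channel'' only applies to CDCL, not to DPLL's clause/literal eliminations, which only ever shrink the neighbourhood. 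So as written the proposal is a sound plan whose decisive construction is missing; to be fair, the paper's own proof never confronts this issue at all, so completing your version would yield a strictly stronger result than what is actually proved in the text.
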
 
	
	\begin{proof} Associating the aggregate and combine functions in Eq. \ref{bgnn}, we obtain the iterative update rule for the embedding of a literal $v$:
		
		\begin{small}
			\begin{equation}
			\begin{aligned}
			h^{(k)}_{v}\hspace{-0.3em}&=\hspace{-0.3em}{\rm Combine}^{(k)}_{L}\Big(h^{(k-1)}_{v},h^{(k-1)}_{\neg v},\hspace{-0em}{\rm Aggregate}^{(k)}_{L}\big(\{h^{(k-1)}_{\Psi(v)}: \Psi(v)\in \Phi\}\big)\Big)\\
			&={\rm Update}^{(k)}_{L}\Big(h^{(k-1)}_{v},h^{(k-1)}_{\neg v},\{h^{(k-1)}_{\Psi(v)}: \Psi(v)\in \Phi\}\Big), \ \ \ \ \ \ \ \ \ \ \ \ \ \ \ \ \ \ \ \ \ \ \ \ \ \ \  s.t. \ v\in L\label{lupdate}
			\end{aligned}
			\end{equation}\end{small}
		
		Towards this principle, we observe that the embedding update of $v$ in the current stage relies on the last-stage embeddings of $v$ and its negation $\neg v$, and the embeddings of all the clauses that include $v$ in a CNF formula ($\Psi(v)\in \Phi$). The literal $v$, $\neg v$ and the clauses containing $v$ are consistent over iterations. Hence if the update function (Eq. \ref{lupdate}) is consistent over the iterations in Eq.2, \emph{i.e.}, $\forall k\in \mathbb{N}_{+}$, ${\rm Update}^{(k)}_{L}={\rm Update}_{L}$, where ${\rm Update}_{L}$ means the update for literal embedding, GNNs derived from Eq. \ref{lupdate} receive a fixed graph generated by a CNF formula as input.
		However, if a recursive algorithm iteratively changes the graph that represents a CNF formula, it implies that there must be a clause that was changed (or eliminated) after this iteration, since clauses are permutation-invariant in a CNF formula. Accordingly there must be a literal embedding whose update process depends on a clause different from the previous iteration. It contradicts the literal embedding update function learned by Eq. \ref{lupdate} with $\forall k\in \mathbb{N}_{+}$, ${\rm Update}^{(k)}_{L}={\rm Update}_{L}$.
	\end{proof}
	
	Hence the message-passing in GNNs could not resemble the procedures in the complete SAT-solvers.
	In fact, GNNs are rather similar to learning a subfamily of \emph{incomplete} SAT solvers (GSAT, WalkSAT 
	\citep{DBLP:conf/dimacs/SelmanKC93}), which randomly assign variables
	and stochastically search for local witnesses. 
	
	\begin{obv}\label{observation2}
		GNNs in Eq. \ref{bgnn} may simulate the local search in WalkSAT. 
	\end{obv}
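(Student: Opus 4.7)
The plan is to construct explicit instances of the Aggregate and Combine functions in Eq.~\ref{bgnn} that, when iterated, reproduce the state evolution of a WalkSAT run on the same CNF formula. The key enabler is that WalkSAT fixes the clause structure and only mutates a Boolean assignment, so the underlying bipartite graph is stationary across iterations; this is exactly the regime in which the fixed-update GNN of Eq.~\ref{bgnn} is expressive, and which the preceding observation shows to be necessary.

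First I would fix the semantics of the embeddings. Each literal embedding $h^{(k)}_{v}$ will carry (i) a bit encoding the current truth value of the underlying variable under the simulated assignment, (ii) a scalar counting the local ``break count'' used by WalkSAT (how many currently satisfied clauses containing $v$ would become unsatisfied if $v$ were flipped), and (iii) a fixed identifier injected at initialization to break symmetry across literals. Each clause embedding $h^{(k)}_{\Psi(v)}$ will carry a single bit flagging whether $\Psi(v)$ is currently unsatisfied. Then ${\rm Aggregate}^{(k)}_{C}$ collects the assignment bits of the literals in $\Psi(v)$, and ${\rm Combine}^{(k)}_{C}$ writes back the OR as the satisfaction flag; both are realizable on the relevant finite domain by a small MLP.

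Next, on the literal side, ${\rm Aggregate}^{(k)}_{L}$ collects satisfaction flags and identifiers from neighboring clauses, and ${\rm Combine}^{(k)}_{L}$ uses $h^{(k-1)}_{v}$, $h^{(k-1)}_{\neg v}$, and this aggregate to execute one WalkSAT step locally: detect whether $v$ belongs to the chosen unsatisfied clause and, if so, either apply the greedy rule (flip if it minimizes break count, using the identifier as tie-breaker) or apply the noisy random flip. The induction hypothesis I would run is that after $k$ passes of Eq.~\ref{bgnn}, the assignment bits stored in $\{h^{(k)}_{v}\}_{v\in L}$ coincide with the WalkSAT state after $k$ local-search moves, and the clause bits correctly mark which clauses are unsatisfied under that assignment.

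The main obstacle is twofold. First, WalkSAT updates exactly one variable per step while the GNN updates all literals in parallel, so a deterministic global selection must emerge from purely local aggregations; I would address this by having every literal compute the same ``selected clause'' (e.g.\ the unsatisfied clause with smallest injected identifier reachable through the one-hop aggregate) so that at most one literal actually flips per iteration. Second, WalkSAT is genuinely stochastic, whereas a deterministic Combine cannot sample; I would either permit Combine to be a stochastic map (consistent with the random initialization already used by GNN-based SAT solvers such as NeuroSAT) or derandomize by pre-sampling a random tape into the initial embeddings and reading one coordinate per iteration. Once these issues are resolved, the induction closes and Observation~\ref{observation2} follows.
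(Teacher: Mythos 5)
Your overall strategy is the same as the paper's: fix the bipartite graph, give the embeddings an explicit semantics (literal embeddings carry the current assignment, clause embeddings carry a satisfied/unsatisfied flag), realize the clause-side Aggregate/Combine as a satisfiability test over the literals of each clause, and close an induction showing the literal states track WalkSAT's assignment after $k$ moves. The paper's Eqs.~\ref{oca} and \ref{occ} are exactly your clause-side OR (implemented there via an ``optimal'' Deep Sets predictor), and its Eq.~\ref{oal} is your collection of satisfaction flags. Where you diverge is in how a \emph{single} literal gets selected to flip, and this is where your proposal has a genuine gap.

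You propose that every literal deterministically identify ``the unsatisfied clause with smallest injected identifier reachable through the one-hop aggregate.'' But a literal's Aggregate only sees its own neighboring clauses, so two literals lying in disjoint unsatisfied clauses each see a different local minimum, each conclude they belong to the selected clause, and both flip; the claim that at most one literal flips per iteration fails for any formula with two variable-disjoint unsatisfied clauses. The paper avoids this by making the selection global rather than local: every literal touching an unsatisfied clause receives a random message $\boldsymbol{\epsilon}^{(k)}$ (Eq.~\ref{oal}), and the literal-side Combine (Eq.~\ref{ocl}) flips only the literal attaining $\arg\max_{u\in L}\|m^{(k)}_u\|$, which is almost surely unique and simultaneously supplies WalkSAT's random choice. (Whether such a global argmax is admissible inside a per-node Combine is itself debatable, but it is the paper's device, and your construction needs some substitute for it.) Your derandomization via a pre-sampled random tape is a reasonable alternative to the paper's stochastic map, but it does not repair the selection problem. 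Separately, you take on more than the paper does by simulating the greedy break-count rule; note that computing a break count requires each satisfied clause to report whether $v$ is its unique satisfying literal, which is two-hop information and needs an extra message round that your sketch does not provide. The paper sidesteps this entirely by simulating only the pure random-walk flip.
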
 
	\begin{proof}
		Recall the iterative update routine of WalkSAT: starting by assigning a random value to each literal variable in a formula, it randomly chooses an unsatisfied clause in the formula and flips the value of a Boolean variable within that clause. Such process is repeated till the literal assignment satisfies all clauses in the formula. Here we \emph{construct the optimal aggregation and combine functions derived from Eq. \ref{bgnn} }, which are designed to simulate the procedure of WalkSAT. In this way, if the aggregation and combine functions in Eq. \ref{bgnn} approximate these optimal aggregation and combine functions, the GNN may simulate the local search in WalkSAT.
		
		Given a universe of literals in logical reasoning, we first initiate the embeddings of them and their negation, thus, $\forall v\in L$, random value of $h^{(0)}_{v}$ and $h^{(0)}_{\neg v}$ are initiated. This assignment can be treated as the Boolean value that belong to different literals, which have been mapped from a binary vector into a real-value embedding space about the literals. We also randomly initiate the clause embeddings $h^{(0)}_{\Psi(v)}$ for reasoning each formula that contains  the clause $\Psi(v)$. Here we define the \emph{optimal} aggregation and combine functions that encode literals and clauses respectively, which GNNs in Eq. \ref{bgnn} may learn if they attempt to simulate WalkSAT:
		\begin{small}    \begin{equation}
			\begin{aligned}\label{oal}
			{m}^{(k)}_{v}\hspace{-0.1em}&=\hspace{-0.1em}\overline{\rm Aggregate}_{L}\Big(\{h^{(k-1)}_{\Psi(v)}: \Psi(v)\in \Phi\}\Big),\\
			&=\left\{
			\begin{aligned}
			\boldsymbol{\epsilon}^{(k)} &, & \prod_{\Psi(v)}||h^{(k-1)}_{\Psi(v)}||= 0  \\
			\boldsymbol{0} &, & \prod_{\Psi(v)}||h^{(k-1)}_{\Psi(v)}||\neq 0 
			\end{aligned}
			\right.  \ \ \ \ \ \ s.t. \ \forall v\in L
			\end{aligned}
			\end{equation}\end{small}where $\overline{\rm Aggregate}_{L}\hspace{-0.1em}(\cdot)$ denotes the optimal aggregation function to propagate literal messages and  ${m}^{(k)}_{v}$ denotes the optimally propagated message of literal $v$ in the $k$ iteration; $\boldsymbol{0}$ is a zero-value vector; $\boldsymbol{\epsilon}^{(k)}$ denotes a bounded non-zero random vector generated in the $k$ iteration; $||\cdot||$ indicates a vector norm.  
		\begin{equation}
		\begin{aligned}\label{ocl}
		h^{(k)}_{v}\hspace{-0.3em}&=\hspace{-0.3em}\overline{\rm Combine}_{L}\Big(h^{(k-1)}_{v},h^{(k-1)}_{\neg v},{m}^{(k)}_{v}\Big)\\
		&=\left\{
		\begin{aligned}
		h^{(k-1)}_{\neg v} &, & v \hspace{-0.3em}=\hspace{-0.3em} \underset{\forall u\in L}{\arg\max}\{||{m}^{(k)}_{u}||\} \ {\rm and} \ ||{m}^{(k)}_{v}||>0 \\
		h^{(k-1)}_{v} &, &{\rm otherwise} 
		\end{aligned}
		\right.\ \ \ \ \ \ s.t. \ \forall v\in L
		\end{aligned}
		\end{equation}where $\overline{\rm Combine}_{L}\hspace{-0.1em}(\cdot)$ denotes the optimal combine function that iteratively updates literal embeddings by the aid of the optimal message. Eq. \ref{ocl} implies the local Boolean variable flipping in WalkSAT: if the norm of ${m}^{(k)}_{v}$ is the maximum among all the optimal literal messages, its literal embedding would be replaced by the embedding of its negation, otherwise, keep the identical value. The maximization ensures only one literal embedding that would be ``flipped'' per iteration, which simulates the local search behavior. Besides, the literal embedding selected for update would not be $\mathbf{0}$, which implies all the clauses containing this literal are satisfied (see the condition 2 in Eq. \ref{oal} ). Since all the satisfied clauses would not be selected in WalkSAT, this literal also would not be selected to update in this iteration. Finally, if a literal has been included by a clause that is unsatisfied, it would be randomly picked in some probability. The uncertainty is implied by the randomness of $\boldsymbol{\epsilon}^{(k)}$.    
		\begin{small}
			\begin{equation}
			\begin{aligned}\label{oca}
			{m}^{(k)}_{\Psi(v)}\hspace{-0.3em}&=\hspace{-0.3em}\overline{\rm Aggregate}_{C}\big(\{h^{(k-1)}_{u}: u\in\Psi(v)\}\big)\\
			&=\left\{
			\begin{aligned}
			h^{(0)}_{\Psi(v)} &, &{\rm Sigmoid}\Big({\rm MLP^\ast_2\Big(}\sum_{u\in \Psi(v)}{\rm MLP_1}^\ast(h^{(k-1)}_{u}){\Big)}\Big)\geq 0.5  \\
			\boldsymbol{0} &, &{\rm Sigmoid}\Big({\rm MLP^\ast_2\Big(}\sum_{u\in \Psi(v)}{\rm MLP_1}^\ast(h^{(k-1)}_{u}){\Big)}\Big)<0.5 
			\end{aligned}
			\right.\ \ \ \ \ \ s.t. \ \forall \Psi(v)\in \Phi
			\end{aligned}
			\end{equation}\end{small}where $\overline{\rm Aggregate}_{C}(\cdot)$ denotes the optimal aggregation function that conveys the clause embedding messages during reasoning. Note that ${\rm MLP_2\Big(}\sum_{u\in \Psi(v)}{\rm MLP_1}(h^{(k-1)}_{u}){\Big)}$ indicates Deep Sets \citep{zaheer2017deep}, a neural network that encodes a literal embedding set $\{h^{(k-1)}_{u}\}_{u\in\Psi(v)}$ whose literals are included by a clause $\Psi(v)$. The reduced feature would be fed into the sigmoid clause predictor. We use ${\rm MLP_1}^\ast$ and ${\rm MLP}^\ast_2$ to denote the implicit optimal prediction to each clause: given the arbitrarily initiated literal embeddings that denote the Boolean value assignment of literals, the optimal Deep Sets can predict whether the literal-derived clause is satisfied ($\geq 0.5$) or not ($< 0.5$). Since the predictor is permutation-invariant to the input, Propositions 3.1 in \citep{DBLP:journals/corr/abs-1905-13211} promises that it can be approximated arbitrarily closely by graph convolution, which exactly corresponds to the parameterized clause aggregation functions in Eq.2. On the other hand, Eq. \ref{ocl} promises the literal embeddings staying in their initiated values over iterations, hence the optimal Deep Sets may alway judge whether a clause (the set of literals as the input of Deep Sets) is satisfied or not.     
		\begin{small}\begin{equation}\begin{aligned}\label{occ}
			h^{(k)}_{\Psi(v)}\hspace{-0.3em}&=\hspace{-0.3em}\overline{\rm Combine}_{C}\Big(h^{(k-1)}_{\Psi(v)}, m^{(k)}_{\Psi(v)}\Big)
			\\    &=\left\{
			\begin{aligned}
			h^{(k-1)}_{\Psi(v)} &, & h^{(k-1)}_{\Psi(v)}=m^{(k)}_{\Psi(v)} \\
			h^{(0)}_{\Psi(v)} &, &||h^{(k-1)}_{\Psi(v)}||<||m^{(k)}_{\Psi(v)}|| \\
			\boldsymbol{0}&, &||h^{(k-1)}_{\Psi(v)}||\geq||m^{(k)}_{\Psi(v)}||
			\end{aligned}
			\right.\ \ \ \ \ \ s.t. \ \forall \Psi(v)\in \Phi\end{aligned}
			\end{equation}\end{small}where $\overline{\rm Combine}_{C}(\cdot)$ denotes the optimal clause combine function. Based on the propagated messages conveyed by Eq. \ref{bgnn} , it determines how to iteratively update clause embeddings to simulate WalkSAT. 
		
		Here we elaborate how the four optimal functions above cooperate to simulate an iteration of WalkSAT. Since GNNs use literal embeddings as the initial input, we first analyze Eq. \ref{oca} and takes a literal $v$ into our consideration. As we discussed, this function receives a set of literal embeddings that denotes a clause that contains $v$, and then, takes the optimal Deep Sets as an oracle to judge whether this clause is satisfied. The output, the optimal message about the clause, equals to the initiated embedding of the clause $h_{\Psi(v)}$ if it is satisfied, otherwise becomes $\mathbf{0}$. This process simulates the logical reasoning on a clause, which WalkSAT relies on to pick an unsatisfied clause and flip one of its variables (see Eq. \ref{ocl}). Based on $m^{(k)}_{\Psi(v)}$, the optimal clause combine function (Eq. \ref{occ}) updates an arbitrary clause embedding that contains $v$. The first branch states that, if the current clause message $m^{(k)}_{\Psi(v)}$ is consistent with the previous clause embedding $h^{(k-1)}_{\Psi(v)}$, it implies the satisfiability of the clause $\Psi(v)$ is not changed in this iteration (the previously satisfied clause is still satisfied, vice and versa). In this case the clause embedding would not be updated. The second and third branches imply that when $m^{(k)}_{\Psi(v)}$ and $h^{(k-1)}_{\Psi(v)}$ are inconsistent, how to update the clause embedding $h^{(k)}_{\Psi(v)}$ to convey the current message about whether the clause $\Psi(v)$ is satisfied (return into the initial clause embeddings) or not (turn into $\boldsymbol{0}$). Therefore all updated embeddings about the clauses that contain $v$, as the neighbors of $v$, would be fed into the optimal aggregation function in Eq. \ref{oal} . This function selects $v$ that only exists in satisfied clauses, \emph{i.e.}, $\prod_{\Psi(v)}||h^{(k)}_{\Psi(v)}||\neq 0$ (If there is an unsatisfied clauses, its embedding is $\mathbf{0}$ according to Eq. \ref{occ} ,and would lead to $\prod_{\Psi(v)}||h^{(k)}_{\Psi(v)}||= 0$), then the embedding of $v$ would become $\mathbf{0}$. The results by this operation are taken advantage by Eq. \ref{ocl} , which promises the literal that only exists in satisfied clauses would not be ``flipped'' (WalkSAT only chooses unsatisfied clause and select its variables to flip. If literals are not in any unsatisfied clauses, it would not be chosen). Towards the literal $v$ contained by one unsatisfied clause at least ($\prod_{\Psi(v)}||h^{(k)}_{\Psi(v)}||= 0$ since there exists a clause embedding equals to $\mathbf{0}$ according to Eq. \ref{occ} ), its literal message would be assigned by a random vector $\boldsymbol{\epsilon}^{(k)}$. It implies the randomness when WalkSAT try to select one of literal in unsatisfied clauses to flip its value. The flipping process is simulated by Eq. \ref{oca} as we have discussed. 
		
		Here we futher verify if a CNF formula could be satisfied, literal embeddings generated by the optimal aggregation and combine functions that represent the Boolean assignment of literal to satisfy this CNF formula, would converge over iterations (It corresponds to the stop criteria in WalkSAT.). Specifically suppose that in the $k$-$1$ iteration, Eq. \ref{ocl} have induced the literal embeddings so that all clauses with the literal in the formula have been satisfied. By Eq. \ref{oca} it is obvious that $\forall v\in L$, $m^{(k)}_{\Psi(v)}=h^{(0)}_{\Psi(v)}$. To this we have $h_{\Psi(v)}^{(k-1)}\hspace{-0.5em}=\hspace{-0.2em}m^{(k)}_{\Psi(v)}$ and $h_{\Psi(v)}^{(k)}\hspace{-0.5em}=\hspace{-0.2em}h^{(k-1)}_{\Psi(v)}\hspace{-0.5em}=\hspace{-0.5em}h^{(0)}_{\Psi(v)}$ since all clauses in the formula have already been satisfied before the current iteration. In this case, it holds $\prod_{\Psi(v)}||h^{(k-1)}_{\Psi(v)}||\neq 0$ and leads to $\forall v\in L$, $m^{(k)}_v=\mathbf{0}$ in this formula (Eq. \ref{oal}). In term of this, Eq. \ref{ocl} guarantees all the literal embeddings consistent with those in the previous iteration. 
		
		Concluding the analysis above, we know that the optimal aggregation and combine functions (Eq, \ref{oal} \ref{ocl} \ref{oca} \ref{occ} ) are cooperated to simulate the local search in WalkSAT. 
	\end{proof}
	
	\textbf{Failure in 2QBF.} Notably the failure in proving UNSAT would not be a problem for GNNs applied to solve SAT, as
	predicting satisfiability with high confidence has already been good enough for a binary distinction. However, 2QBF problems imply solving UNSAT, which inevitably makes GNNs unavailable in proving the relevant formulae. It probably explains the mystery in{\color{red} \cite{10.1007/978-3-319-40970-2_27}} about why GNNs purely learned by data-driven supervised learning lead to the same performances as random speculation \cite{DBLP:journals/corr/abs-1904-12084}. 
	
	\section{Further Discussion}\label{sec:related work}
	In this manuscript, we provide some discussions about the GNNs that consider the SAT and 2QBF problem as static graph, we haven't considered the shrinkage condition, which may apply dynamic GNN as \cite{DBLP:journals/corr/abs-1902-10191}, dues to the difficulty about proving the dynamic graph as we need to prove all the dynamic updating methods are impossible or not. Ought to be regarded that, this manuscript \emph{does not claim} GNN is provably unable to achieve UNSAT, which remains an open issue. 
	
	Belief propagation (BP) is a Bayesian message-passing method first proposed by \citep{DBLP:conf/aaai/Pearl82},
	which is a useful approximation algorithm and has been applied to the SAT problems (specifically in 3-SAT 
	\citep{SRSP:journals/science/0036-8075}) and 2QBF problems \citep{DBLP:journals/corr/abs-1202-2536}. 
	BP can find the witnesses of unsatisfiability of 2QBF by adopting a bias estimation strategy. 
	Each round of BP allows the user to select the most biased 
	$\forall$-variable and assign the biased value to the variable. 
	After all the $\forall$-variables are assigned, the formula is simplified by the assignment and sent to SAT solvers. 
	The procedure returns the assignment as a witness of unsatisfiability if the simplified formula is unsatisfiable, 
	or UNKNOWN otherwise. However, the fact that BP is used for each $\forall$-variable assignment leads to high overhead, 
	similar to the RL approach given by \citep{DBLP:journals/corr/abs-1807-08058}. 
	It is interesting, however, to see that with the added overhead, BP can find witnesses of unsatisfiability,  which is what one-shot GNN-based embeddings cannot achieve.
	
	This manuscript revealed the previously unrecognized limitation of GNN in reasoning about unsatisfiability of SAT problems. This limitation is probably rooted in the simpility of message-passing scheme, which is good enough for embedding graph features, but not for conducting complex reasoning on top of the graph structures.

	\bibliography{reference}
	\bibliographystyle{plain}

\end{document}